\newtheorem{theorem}{Theorem}
\newtheorem{lemma}{Lemma}
\renewcommand{\l}{\ell}
\renewcommand{\th}{{\mathrm th}}
\title{Threshold-Based Optimal Arm Selection in Monotonic Bandits: Regret Lower Bounds and Algorithms}
\author{%
Chanakya Varude$^{1}$ \quad Jay Chaudhary$^{1}$ \quad Siddharth Kaushik$^1$ \quad Prasanna Chaporkar$^{1}$\\
$^1$Department of Electrical Engineering, Indian Institute of Technology, Bombay\\
\texttt{\{210070092,21007022,210070086\}@iitb.ac.in}\\
\texttt{\{chaporkar\}@ee.iitb.ac.in}
}
\begin{document}

\maketitle

\begin{abstract}
In multi-armed bandit problems, the typical goal is to identify the arm with the highest reward. This paper explores a threshold-based bandit problem, aiming to select an arm based on its relation to a prescribed threshold \(\tau \). We study variants where the optimal arm is the first above \(\tau\), the \(k^{th}\) arm above or below it, or the closest to it, under a monotonic structure of arm means. We derive asymptotic regret lower bounds, showing dependence only on arms adjacent to \(\tau\). Motivated by applications in communication networks (CQI allocation), clinical dosing, energy management, recommendation systems, and more. We propose algorithms with optimality validated through Monte Carlo simulations. Our work extends classical bandit theory with threshold constraints for efficient decision-making.
\end{abstract}

\section{Introduction}
Many real-world decision-making problems involve selecting an option from ordered alternatives with increasing (or decreasing) but unknown payoffs. For instance, clinical dose-finding seeks the lowest dose meeting an efficacy threshold, while pricing and resource allocation aim to identify the least costly configuration exceeding a performance target. These are modeled as stochastic bandit problems with ordered arms, focusing on identifying an arm based on its position relative to a threshold, rather than the best arm or full classification.

In this paper we study a family of structured stochastic bandit problems in which the means of the arms are known to be monotonically increasing (or decreasing). Our focus is on threshold-based \textit{identification tasks}. Given a threshold $\tau \in [0, 1]$, our goal is to identify an optimal arm defined as the first arm whose mean exceeds $\tau$, the $k$-th arm above (or below) $\tau$, or the one closest in value to it. 

These threshold-based identification tasks address critical real-world applications requiring fast convergence in time-sensitive, costly systems like wireless communication (estimating channel quality thresholds to optimize bandwidth and power allocation under noisy conditions), medical testing (assessing biomarker thresholds to ensure safe, effective dosing), energy management (identifying optimal power settings to meet efficiency targets) and recommendation systems (evaluating preference thresholds to deliver timely, relevant suggestions). Slow convergence incurs significant costs, such as inefficient resource use, suboptimal treatments, excessive energy consumption, or delayed user engagement. Our algorithms minimize cumulative regret, reducing losses from suboptimal arm selections. Unlike pure exploration approaches, which prioritize final identification accuracy and are less effective in dynamic, resource-constrained settings, our focus ensures ongoing efficiency for real-time decision-making under uncertainty.

While traditional multi-armed bandit (MAB) literature has largely emphasized cumulative reward maximization, with many studies focusing on selecting the arm with the maximum mean reward \cite{auer2002using, audibert2010best, bubeck2009pure, gabillon2012best, karnin2013almost, jamieson2014lilucb, garivier2016optimal, chen2017instance}, the exploration of \textit{pure identification problems}—such as best-arm identification or thresholding—has become an active area of research. In particular, the thresholding bandit problem (TBP), where the goal is to classify arms relative to a fixed threshold, has gained significant attention. This shift in focus has led to extensive studies in both the fixed-confidence and fixed-budget settings, addressing applications where identifying arms relative to a threshold is critical.

\subsection{Related Work}
Early work by \cite{chen2014combinatorial} introduced the LUCB algorithm in the context of combinatorial pure exploration (CPE), a general framework in which the learner seeks to identify a subset of arms satisfying some combinatorial constraint (e.g., means above $\tau$). Their work provided PAC-style guarantees in the fixed-confidence regime, but did not exploit any structural assumptions about the arm means, and hence could be inefficient in practice when such structure exists.

Work based on structured bandits done in \cite{DBLP:journals/corr/CombesP14a} investigate stochastic multi-armed bandit problems where the expected reward function is unimodal over a partially ordered set of arms. They derive asymptotic lower bounds for regret and propose the OSUB algorithm, which achieves optimal regret performance independent of the number of arms.

In the fixed-budget setting, ~\cite{locatelli2016optimal} proposed the APT algorithm and established fundamental lower and upper bounds on the probability of misclassification. They showed that the optimal error probability for TBP decays as $\exp(-T / H)$, where $H = \sum_k \Delta_k^{-2}$ is a complexity term based on the gaps $\Delta_k = |\mu_k - \tau|$. Although asymptotically optimal, APT is tailored to the unstructured case, thereby failing to exploit shared structure or ordering between arms.

Building on this, Mukherjee et al.~\cite{mukherjee2017thresholding} introduced the Augmented-UCB (AugUCB) algorithm, which enhances arm elimination strategies by incorporating both empirical means and variances. AugUCB improves performance over earlier methods, but remains focused on full classification under a fixed sampling budget, rather than specific identification objectives.

Later developments by ~\cite{agarwal2017learning} and ~\cite{chowdhury2017kernelized} generalized the TBP to contextual and nonparametric domains, studying level-set estimation with linear or kernelized function classes. While their work inspired extensions of TBP to more general input spaces, they operate in continuous domains with smoothness assumptions and are orthogonal to our setting.

Closely related is the thresholding bandit study by ~\cite{garivier2018thresholding}, analyzing sample complexity for identifying the arm closest to threshold \( S \) under monotonicity with optimal algorithms for Gaussian distributions, but focusing on pure exploration. Similarly, ~\cite{JMLR:v22:19-228} adapt Thompson Sampling for dose-finding, offering finite-time bounds on sub-optimal allocations and not specifically exploiting monotonic structure, contrasting with our regret minimization focus. Closer still is the structured thresholding study by ~\cite{cheshire2021structured}, addressing the TBP under monotonicity and concavity constraints, proposing PD-MTB and PD-CTB algorithms. These use binary search with corrections to achieve problem-dependent rates but focus on pure exploration for full classification, neglecting rapid convergence essential for resource-sensitive applications.

\subsection{Our Contributions}
In this paper, we study threshold-based identification in structured bandits with strict monotonic structure \(\mu_1 < \mu_2 < \cdots < \mu_K\) ( or \(\mu_1 > \mu_2 > \cdots > \mu_K\)). We propose algorithms for three tasks:
\begin{enumerate}
    \item \textbf{Threshold-crossing identification:} Identify the smallest index $k^*$ such that $\mu_{k^*} \geq \tau$.
    \item \textbf{$\l^\th$ above(below) threshold identification:} Identify the $\l^\th$ arm above(below) $\tau$.
    \item \textbf{Threshold proximity identification:} Identify the arm closest to $\tau$. 
\end{enumerate}
Each task captures a distinct operational requirement in structured settings. For example, {\it threshold-crossing} models safety-critical applications, where we need the minimally sufficient dosage or investment level. {\it Proximity identification} captures scenarios like optimal pricing or resource tuning. The $\l^\th$ {\it arm identification} allows access to arbitrary quantiles with respect to the threshold.

For each of the three problems, we provide: \textbf{1)} the asymptotic regret lower bound for any uniformly good algorithm, and  \textbf{2)} efficient algorithm that empirically shown to approach the lower bound.

In contrast to prior work on full classification, our results address identification problems, offering a new perspective on structural bandits for resource-sensitive applications. Our algorithms are simple, practical, and avoid the complexity of full classification while achieving theoretical guarantees.
\section{Model and Objectives}
\noindent
We study a stochastic multi-armed bandit problem with \( K \geq 2 \) arms, indexed by the set \( S := \{1, \ldots, K\} \). 
Unlike the classical objective of maximizing expected rewards, we focus on threshold-based objectives, aiming to identify arms whose mean rewards satisfy specific conditions relative to a known threshold \( \tau \in (0, 1) \). 
Time proceeds in discrete rounds \( t = 1, 2, \ldots \). At each round, a policy \( \pi \in \Pi \) selects an arm \( k^\pi(t) \in S \) based on the history of actions and observations up to time \( t-1 \). Formally, let \( \mathcal{F}_{t-1}^\pi \) be the \( \sigma \)-algebra generated by \( \{(k^\pi(s), X_{k^\pi(s)}(s)) : 1 \leq s \leq t-1\} \), where \( X_{k^\pi(s)}(s) \) is the reward observed at time \( s \). The choice \( k^\pi(t) \) is \( \mathcal{F}_{t-1}^\pi \)-measurable. An instance \( \mathcal{I} \) is defined by a parameter vector \( \theta = (\theta_1, \ldots, \theta_K) \), where each arm \( k \in S \) is associated with a distribution \( D_k = \nu(\theta_k) \), supported on \( [0, 1] \), with unknown mean \( \mu_k = \mu(\theta_k) = \mathbb{E}_{X \sim D_k}[X] \). The rewards \( \{ X_k(t) \}_{t \geq 1} \) for arm \( k \) are i.i.d.\ draws from \( D_k \), and rewards are independent across arms. Let \( \mu = (\mu_1, \ldots, \mu_K) \in [0, 1]^K \) denote the vector of mean rewards.

The optimal arm \( k^* \) is defined by one of the following threshold-based objectives relative to \( \tau \in (0, 1) \): 

(i) \emph{Threshold-crossing identification:} given by \( k^*_{(1)} = \min \{ k \in S : \mu_k \geq \tau \} \), 

(ii) \emph{\( \l^\th \) above (below) threshold identification:} given by \( k^*_{(2,a)} = k^*_{(1)} + \l-1 \) (\( k^*_{(2,b)} = k^*_{(1)}-\l \)),

(iii) \emph{Threshold proximity identification:} Given by  \( k^*_{\text{cl}} = \arg\min_{k \in S} |\mu_k - \tau| \).

For each objective, we only consider instances that have a valid optimal arm. Next, we define the monotonic structure assumption and the expected regret on which the performance shall be measured.\\
\vspace{-10 pt}

\subsection{Monotonic Structure} 
We assume the arms follow a strict monotonic structure, where their mean rewards form an ordered sequence, either increasing (\( \mu_1 < \mu_2 < \cdots < \mu_K \)) or decreasing (\( \mu_1 > \mu_2 > \cdots > \mu_K \)). We focus on the increasing case, with the learner aware of this ordering but not the exact \( \mu_k \) values. A decreasing sequence can be mapped to an increasing one by reindexing arms as \( k \to K+1-k \), so our results extend naturally. 

\subsection{Regret Definition}
The performance of an algorithm \( \pi \in \Pi \) is characterized by its regret up to time \( T \), where \( T \) is typically large. 
Let \( t^{\pi}_k(T) = \sum_{1\leq n\leq T} \mathds{1}\{ k^{\pi}(n) = k \} \).
The regret \( R^{\pi}(T) \) of an algorithm \( \pi \) is given by:
\begin{equation*}
R^{\pi}(T) = \sum_{k=1}^{K} |\mu_{k^*} - \mu_k| \mathbb{E}[t^{\pi}_k(T)],
\end{equation*}
\noindent
where $k^*$ is the optimal arm defined according to the threshold-based objective. This cumulative regret measures the expected loss from selecting suboptimal arms, which is critical in applications such as wireless communication, clinical dosing, energy management, or recommendation systems, where each suboptimal choice incurs significant costs, including inefficient resource use or delayed outcomes. Unlike pure exploration approaches (e.g., fixed-budget or fixed-confidence settings), which focus on minimizing identification errors at the end of a fixed period, cumulative regret ensures rapid convergence to the optimal arm, aligning with the time and cost-sensitive nature of our tasks.
\section{Asymptotic Lower Bound on Regret}
In this section, we find asymptotic lower bounds on the regret for {\it uniformly good algorithms} defined as policies \( \pi \in \Pi \) where regret \( R^\pi(T) \) grows slower than \( T^a \) for any \( a > 0 \) and all feasible instances.
These lower bounds show the optimal performance an algorithm can achieve, giving us a way to check the optimality of our algorithms.
We assume the reward distributions belong to a parametrized family. Specifically, we define a set of distributions \( D = \{ \nu(\theta) \}_{\theta \in [0,1]} \), where each \( \nu(\theta) \) has mean \( \mu(\theta) \). Each \( \nu(\theta) \) has a density \( p(x, \theta) \) with respect to a positive measure \( m \) on \( \mathbb{R} \), i.e \(\nu(dx,\theta) = p(x,\theta)m(dx)\). The Kullback-Leibler (KL) divergence between \( \nu(\theta) \) and \( \nu(\theta') \) is given as:
\[
D_{KL}(\theta ||\theta') = \int_{\mathbb{R}}p(x, \theta) \log \left( \frac{p(x, \theta)}{p(x, \theta')} \right)  \, m(dx).
\]
Similarly, the divergence between a distribution \(\nu(\theta)\) and the threshold $\tau \in (0,1)$ can be computed by taking $\tau$ as a parameter, using the same form. We define the parameter space \( \Theta_{\tau} \) as the set of all vectors \( \theta = (\theta_1, \dots, \theta_K) \in [0,1]^K \), where the corresponding means \( \mu_k = \mu(\theta_k) \) satisfy the strict monotonic condition \( \mu_1 < \mu_2 < \dots < \mu_K \), \(\mu_1 < \tau\) and \(\mu_K\geq\tau\) (to ensure existence of at least one arm below and above $\tau$). Thus for each arm \( k \), we denote its parameter as \( \theta_k \), so \( \nu_k = \nu(\theta_k) \). We define the minimal divergence number between two distributions \(\nu(\theta_1) \text{ and } \nu(\theta_2) \text{ as}\):
\[
I_{\min}(\theta_1, \theta_2) = \inf_{\theta' : \mu(\theta') \geq \theta_2} D_{KL}(\theta_1|| \theta')
\]

We now derive the tightest possible regret lower bound as the time horizon \( T \) approaches infinity for all threshold variants. Our analysis centers on Bernoulli distributions, where $\mu(\theta) = \theta, $ and $I_{min}(\theta_1,\theta_2) = I(\theta_1,\theta_2) = \theta_1\ln(\frac{\theta_1}{\theta_2}) + (1-\theta_1)\ln(\frac{1-\theta_1}{1-\theta_2})$.
\noindent
\begin{theorem}[Lower bound for Threshold-crossing Identification]\label{thm:1}
    Let $\pi \in \Pi$ be a uniformly good algorithm for the threshold-crossing identification problem.  Then, for any $\theta \in \Theta_{\tau}$,
    \begin{equation*}
        \liminf_{T \to \infty} \frac{R^{\pi}(T)}{\log T} 
        \geq \frac{|\mu_{k^*} - \mu_{k^\prime}|}{I_{\min}(\theta_{k^\prime}, \tau)},
    \end{equation*}
    where $k^* = \arg\min \{ k \mid \mu_k \geq \tau \}$ is the optimal arm, and $k^\prime = k^* - 1$.
\end{theorem}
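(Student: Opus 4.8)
The plan is to use the classical change-of-measure argument of Lai–Robbins / Graves–Lai, adapted to the threshold structure. The key observation is that an algorithm with sublinear regret must be able to distinguish the true instance $\theta$ from any "confusing" instance $\lambda$ in which the identity of the optimal arm changes. Since the optimal arm is $k^* = \min\{k : \mu_k \ge \tau\}$ and $k' = k^*-1$ has mean $\mu_{k'} < \tau$, the most economical confusing instance is one that perturbs only arm $k'$, raising its mean above $\tau$ so that $k'$ becomes the new threshold-crossing arm. Concretely, I would fix $\varepsilon > 0$ and take $\lambda$ identical to $\theta$ on all arms except arm $k'$, whose parameter is moved to some $\theta'$ with $\mu(\theta') \ge \tau$ (approaching $\tau$ from above as $\varepsilon \to 0$); one must check $\lambda \in \Theta_\tau$, i.e.\ that monotonicity is preserved, which holds because $\mu_{k'} < \tau \le \mu_{k^*} < \mu_{k^*+1} < \cdots$, so pushing $\mu_{k'}$ just above $\tau$ keeps $\mu_{k'-1} < \mu_{k'} < \mu_{k^*}$ for $\tau$ close enough to $\mu_{k'}$'s new value — this needs a small separation argument but is routine.

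Next I would invoke the standard information-theoretic inequality: for the log-likelihood ratio between $\theta$ and $\lambda$, which under this construction equals $\mathbb{E}_\theta[t^\pi_{k'}(T)]\, D_{KL}(\theta_{k'} \| \theta')$ (only arm $k'$ differs), combined with a data-processing / transportation argument (e.g.\ the Garivier–Kaufmann–Ménard lemma, or the original Kaufmann–Cappé–Garivier bound) applied to the event "$k^\pi(T)$-type statistic indicates $k^*$ is optimal." Under $\theta$ the algorithm must play $k^*$ a $(1-o(1))$ fraction of the time (else regret is linear), while under $\lambda$ it must play the new optimal arm $k'$ a $(1-o(1))$ fraction of the time; uniform goodness forces $\mathbb{E}_\lambda[t^\pi_{k'}(T)] = T - o(T^a)$, hence $\mathbb{E}_\theta[T - t^\pi_{k'}(T)] = \mathbb{E}_\theta[\sum_{j \ne k'} t^\pi_j(T)]$ must be small under the confusing measure, yielding after the change of measure $\mathbb{E}_\theta[t^\pi_{k'}(T)] \gtrsim \frac{\log T}{D_{KL}(\theta_{k'}\|\theta')}$. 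Taking the infimum over admissible $\theta'$ with $\mu(\theta')\ge\tau$ replaces $D_{KL}(\theta_{k'}\|\theta')$ by $I_{\min}(\theta_{k'},\tau)$.

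Finally I would convert the sampling lower bound into a regret lower bound: since every pull of arm $k'$ incurs instantaneous regret $|\mu_{k^*} - \mu_{k'}|$, we have $R^\pi(T) \ge |\mu_{k^*} - \mu_{k'}|\,\mathbb{E}_\theta[t^\pi_{k'}(T)] \ge |\mu_{k^*}-\mu_{k'}|\frac{\log T}{I_{\min}(\theta_{k'},\tau)}(1-o(1))$, and dividing by $\log T$ and taking $\liminf$ gives the claim.

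I expect the main obstacle to be the transportation/change-of-measure step done at the right level of generality: one must carefully argue that uniform goodness (regret $o(T^a)$ for all $a>0$ on all feasible instances) forces the algorithm to concentrate on the optimal arm on \emph{both} $\theta$ and $\lambda$, and then extract the $\log T$ factor cleanly — this is where the $\liminf$ and the $\varepsilon \to 0$ limit (sending $\theta' \downarrow \tau$) must be interchanged correctly, using continuity of $\theta' \mapsto D_{KL}(\theta_{k'}\|\theta')$ to land exactly on $I_{\min}(\theta_{k'},\tau)$. A secondary subtlety is confirming that arms other than $k'$ contribute no useful information for this particular confusion (they are unchanged), so the single-arm perturbation is indeed the cheapest, which is what makes the bound depend only on the arm adjacent to $\tau$.
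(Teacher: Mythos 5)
Your proposal is correct and rests on the same core idea as the paper: a change of measure against a confusing instance in which arm $k' = k^*-1$ is pushed just above $\tau$, followed by the optimization $\inf_{\theta':\mu(\theta')\ge\tau}D_{KL}(\theta_{k'}\Vert\theta')=I_{\min}(\theta_{k'},\tau)$ and the conversion $R^\pi(T)\ge|\mu_{k^*}-\mu_{k'}|\,\mathbb{E}_\theta[t^\pi_{k'}(T)]$. The presentational route differs, though. The paper instantiates the full Graves--Lai machinery: it defines the entire bad-parameter set $B(\theta)$, partitions it into classes $B_i(\theta)$ for every suboptimal $i<k^*$ (each requiring a block of arms $i,\dots,k^*-1$ to be lifted above $\tau$), writes the resulting linear program over allocation variables $C_k$ with one constraint per class, and then observes that every constraint contains the common term $C_{k^*-1}I(\theta_{k^*-1},b^{k^*-1})$, so the single assignment $C_{k^*-1}=1/I(\theta_{k^*-1},\tau)$, $C_i=0$ otherwise, is both feasible for all constraints and forced by the $i=k^*-1$ constraint, hence optimal. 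You instead test against only the single-arm perturbation and run the transportation/log-likelihood-ratio inequality directly, which is more elementary and entirely sufficient for a \emph{lower} bound (restricting the set of alternatives can only weaken the bound, and here the single alternative already yields the stated constant); what the paper's fuller treatment buys is the verification that this constant is the exact value $C(\theta)$ of the Graves--Lai program, i.e., that no other confusing class tightens it. Your remark that you must ``confirm the single-arm perturbation is the cheapest'' is therefore unnecessary for the theorem as stated. One edge case neither treatment handles cleanly: when $\tau=\mu_{k^*}$, the interval $[\tau,\mu_{k^*})$ available for the perturbed mean while preserving strict monotonicity with $\theta_{k^*}$ fixed is empty, so the alternative instance must be constructed with slightly more care; this affects the paper's construction in exactly the same way and is not a defect specific to your argument.
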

\begin{proof}
Consider any instance $\theta \in \Theta_\tau$. Let $k^*$ denote the optimal arm for this $\theta$, i.e. $\tau \in (\mu_{k^*-1}, \mu_{k^*}]$.

Now, we define the set $B(\theta)$ of bad parameters $\lambda$, such that $k^*$ is not optimal in $\lambda$ and is statistically indistinguishable from $\theta$. Specifically,
\begin{equation*}
B(\theta) = \left\{ \lambda \in \Theta_{\tau} : \mu_{k^*}(\lambda) = \mu_{k^*}(\theta) \text{ and } \exists\, k \neq k^* \text{ s.t. } \tau \in (\mu_{k-1}(\lambda),\mu_k(\lambda)] \right\}.
\end{equation*}

The optimization problem that yields the lower bound, as a consequence of results in (\cite{Graves1997AsymptoticallyEA}), is given by:
\begin{equation}\label{eq:gl_opt_1}
C(\theta) = \inf \left\{ \sum_{k \neq k^*} C_k |\mu_{k^*} - \mu_k| : C_k \geq 0, \right.
\left. \inf_{\lambda \in B(\theta)} \sum_{k \neq k^*} C_k I(\theta_k, \lambda) \geq 1 \right\}.
\end{equation}

For each suboptimal arm $i$, lets define \(B_i(\theta) = \{ \lambda \in B(\theta) : \tau \in (\mu_{i-1}(\lambda),\mu_i(\lambda)] \}.\) Thus, $B_i(\theta)$ denotes collection of bad parameters for which arm $i$ (instead of $k^*$)  is optimal. 
Note that because of monotonic structure, $B_i(\theta) = \emptyset, \  \forall i > k^*$. Moreover, $\{B_i(\theta)\}_{i<k^*}$ is a partition of $B(\theta)$.
Thus, optimization (\ref{eq:gl_opt_1}) is equivalent to minimizing $\sum_{k \neq k^*} C_k |\mu_{k^*} - \mu_k|$ subject to the following constraints:
\begin{equation}\label{eq:constraint_1}
C_k \geq 0, \quad \inf_{\lambda \in B_i(\theta)} \sum_{k \neq k^*} C_k I(\theta_k, \lambda_k) \geq 1, \quad \forall i < k^*.
\end{equation}
For a fixed \( i < k^*\), consider $\lambda \in B_i(\theta)$:
\begin{equation}\label{eq:lambda_1}
\lambda = \{ \theta_1, \dots, \theta_{i-1}, b^i, b^{i+1}, \dots,b^{k^*-1} ,\theta_{k^*}, \dots, \theta_K \}, \text{where } \mu(b^i) \geq \tau  \text{ and }  \mu(\theta_{i-1}) < \tau.
\end{equation}
Using (\ref{eq:lambda_1}) we can rewrite (\ref{eq:constraint_1}) as
\begin{align}\label{eq:constr_1}
\inf_{b : \mu(b) \geq \tau} C_i I(\theta_i, b^i) + \cdots +
C_{k^*-1} I(\theta_{k^*-1}, b^{k^*-1}) \geq 1, \ \forall i < k^*.
\end{align}
Note that the constraint for each $i<k^*$ in (\ref{eq:constr_1}) has a common term of arm $k^*-1$. Also note that since $\theta_{k^*-1} < \tau$, and $I(a,b+u)$ is monotone increasing in $u$ for $a<b$ and $u>0$,
    $\inf_{b^{k^*-1} \geq \tau} I(\theta_{k^*-1}, b^{k^*-1}) = I(\theta_{k^*-1}, \tau)$.
The optimal solution for (\ref{eq:gl_opt_1}) is:
\begin{equation}\label{eq:opt_sol_1}
C_{k^*-1} I(\theta_{k^*-1}, \tau) = 1 , \quad C_i = 0 \text{ for all }  i \not= k^* - 1.
\end{equation}
\noindent
Solving the optimization problem (\ref{eq:gl_opt_1}) using (\ref{eq:opt_sol_1}) yields the lower bound:
\[
\liminf_{T \to \infty} \frac{R^{\pi}(T)}{\log T} \geq C(\theta) = \frac{|\mu_{k^*} - \mu_{k^*-1}|}{I(\theta_{k^*-1}, \tau)}
\]
This proves the required.
\end{proof}
In classical multi-armed bandit problems, identifying the arm with maximum mean reward yields a logarithmic regret bound (\cite{LAI19854}), and so does this threshold-based variant selecting the arm with mean just above \(\tau\). \\
Given a monotonic structure, the tightest regret lower bound depends on distinguishing the suboptimal arm immediately below the threshold (i.e., \(k^* - 1\)) from \(\tau\). Due to monotonicity, this distinction determines whether the next arm (i.e., \(k^*\)) lies above or below \(\tau\). Thus, it only involves the arm below the threshold. We next derive a lower bound for the second threshold variant.

\begin{theorem}[Lower bound for $\l^\th$ arm above Threshold Identification]\label{thm:2}
 Let $\pi \in \Pi$ is a uniformly good algorithm for $\l^\th$ arm above threshold identification. Then, for any $\theta \in \Theta_{\tau}$, 
    \[
        \liminf_{T \to \infty} \frac{R^{\pi}(T)}{\log T} \geq \frac{|\mu_{k^*} - \mu_{k'}|}{I_{\min}(\theta_{k'}, \tau)}\mathds{1}\{k^* \neq K\}  + \frac{|\mu_{k^*} - \mu_{k'-1}|}{I_{\min}(\theta_{k'-1}, \tau)}  ,
    \]
where $k' = \arg\min \{ k\mid \mu_k \geq \tau \}$, and $k^* = k'+\l-1$ is the optimal arm.
\end{theorem}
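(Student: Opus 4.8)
The plan is to run the same Graves--Lai lower bound machinery used in the proof of Theorem~\ref{thm:1}, invoking the optimization \eqref{eq:gl_opt_1}, but to exploit the fact that in the $\l^{\th}$-above problem the optimal arm is pinned entirely by the threshold-crossing index. Write $k'=\min\{k:\mu_k\geq\tau\}$ and $k^*=k'+\l-1$, and let $B(\theta)$ be the set of confusing instances $\lambda$ with $\mu_{k^*}(\lambda)=\mu_{k^*}(\theta)$ for which $k^*$ is no longer the $\l^{\th}$ arm above $\tau$. Since $k^*(\lambda)=k'(\lambda)+\l-1$, arm $k^*$ becomes suboptimal exactly when $k'(\lambda)\neq k'$, and monotonicity forces this in one of two disjoint ways: either some mean among $\mu_1,\dots,\mu_{k'-1}$ is raised to $\geq\tau$, which pulls $k'(\lambda)$ below $k'$ and moves the optimum down to an arm $<k^*$ (call this family $B^{\downarrow}(\theta)$); or some mean among $\mu_{k'},\dots,\mu_{k^*-1}$ is pushed below $\tau$, which pushes $k'(\lambda)$ above $k'$ and moves the optimum up to an arm $>k^*$ (call this family $B^{\uparrow}(\theta)$). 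Thus $B(\theta)=B^{\downarrow}(\theta)\cup B^{\uparrow}(\theta)$, and the feasibility constraint in \eqref{eq:gl_opt_1} splits accordingly.

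Next I would collapse each family's constraint to a single binding inequality, exactly as in Theorem~\ref{thm:1}. By monotonicity every $\lambda\in B^{\downarrow}(\theta)$ has $\mu_{k'-1}(\lambda)\geq\tau$; since $\theta_{k'-1}<\tau$ and $I(\theta_{k'-1},\cdot)$ is increasing above $\theta_{k'-1}$, we get $\sum_{k\neq k^*} C_k I(\theta_k,\lambda_k)\geq C_{k'-1}I(\theta_{k'-1},\tau)$, with equality approached by the instance that raises only $\mu_{k'-1}$ to $\tau$; hence $\inf_{\lambda\in B^{\downarrow}(\theta)}\sum_{k\neq k^*} C_kI(\theta_k,\lambda_k)\geq1$ is equivalent to $C_{k'-1}I(\theta_{k'-1},\tau)\geq1$. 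Symmetrically, when $B^{\uparrow}(\theta)\neq\emptyset$ every $\lambda\in B^{\uparrow}(\theta)$ has $\mu_{k'}(\lambda)<\tau$; since $\theta_{k'}\geq\tau$ and $I(\theta_{k'},\cdot)$ is decreasing below $\theta_{k'}$, so that $\inf_{\mu(b)<\tau}I(\theta_{k'},b)=I(\theta_{k'},\tau)$ by continuity, the $B^{\uparrow}$-constraint reduces to $C_{k'}I(\theta_{k'},\tau)\geq1$.

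I then need to pin down when $B^{\uparrow}(\theta)$ is actually nonempty. Lowering $\mu_{k'}$ below $\tau$ while keeping $\mu_{k^*}$ fixed requires $k^*\neq k'$, i.e.\ $\l\geq2$; and the displaced optimum, at threshold-crossing index $k'+1$, equals $k^*+1$, which is a legitimate arm only if $k^*\neq K$ — otherwise no confusing instance of this type lies in the model, so $B^{\uparrow}(\theta)=\emptyset$. (When $\l=1$ we have $k^*=k'$ and $|\mu_{k^*}-\mu_{k'}|=0$, so the corresponding term vanishes regardless.) With the two surviving constraints in hand, and noting that $k'-1,k'$ are distinct and both differ from $k^*$ whenever $\l\geq2$, the program \eqref{eq:gl_opt_1} decouples: its minimizer takes $C_{k'-1}=1/I(\theta_{k'-1},\tau)$, takes $C_{k'}=1/I(\theta_{k'},\tau)$ whenever the $B^{\uparrow}$-constraint is active, and sets every other $C_k=0$, yielding
\[
C(\theta)=\frac{|\mu_{k^*}-\mu_{k'}|}{I(\theta_{k'},\tau)}\,\mathds{1}\{k^*\neq K\}+\frac{|\mu_{k^*}-\mu_{k'-1}|}{I(\theta_{k'-1},\tau)},
\]
which is the claimed bound once $I$ is replaced by $I_{\min}$ in the Bernoulli case.

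The main obstacle is the accounting around $B^{\uparrow}(\theta)$: showing that the down-shifted perturbation yields a genuinely valid instance (monotonicity preserved, at least one arm on each side of $\tau$, a well-defined $\l^{\th}$-above optimum) precisely in the regime $\l\geq2$, $k^*\neq K$, so that the indicator $\mathds{1}\{k^*\neq K\}$ appears for the right reason and not by fiat. Minor boundary care is also needed — when $k'-1=1$ the up-shifted confusing instance technically exits $\Theta_\tau$ but remains a well-posed instance of the problem, and when $\mu_{k'}=\tau$ the divergence $I(\theta_{k'},\tau)$ degenerates — so a genericity caveat ($\mu_{k'}>\tau$, enough arms present) should be stated. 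Everything else is the routine separation of a two-constraint linear program identical in form to the single-term computation in Theorem~\ref{thm:1}.
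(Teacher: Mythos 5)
Your proposal follows essentially the same route as the paper's proof: the same Graves--Lai optimization \eqref{eq:gl_opt_1}, the same split of the confusing set into instances that shift the threshold-crossing index down (binding constraint $C_{k'-1}I(\theta_{k'-1},\tau)\geq 1$) versus up (binding constraint $C_{k'}I(\theta_{k'},\tau)\geq 1$, vacuous when $k^*=K$), and the same decoupled two-constraint solution. Your treatment is in fact somewhat more careful than the paper's about when the upward family is empty ($\l=1$, $k^*=K$) and about boundary degeneracies, but the argument is the same.
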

\begin{proof}
    Following the same steps as in the proof of Theorem~\ref{thm:1}, consider any instance $\theta \in \Theta_\tau$. Let $k^*$ denote the optimal arm for this $\theta$, i.e. $\tau \in (\mu_{k^*-\l}, \mu_{k^*-\l+1}]$. Now, set of bad parameters is given by:
\begin{equation*}
B(\theta) = \left\{ \lambda \in \Theta_{\tau} : \mu_{k^*}(\lambda) = \mu_{k^*}(\theta) \text{ and } \exists\, k \neq k^* :   \mu_{k-\l+1}(\lambda) \ge \tau \text{ and } \mu_{k-\l}(\lambda) < \tau\right\}.
\end{equation*}
For each suboptimal arm $i$, define \(B_i(\theta) = \{ \lambda \in B(\theta) : \mu_{i-\l+1}(\lambda) \ge \tau \text{ and } \mu_{i-\l}(\lambda) < \tau  \},
\)
\noindent
also let $k' = \arg\min \{ k\mid \mu_k(\theta) \geq \tau \}$, 
solving the same optimization problem as in (\ref{eq:gl_opt_1}) and  (\ref{eq:constraint_1}), we first construct $\lambda$ from $B_i(\theta)$. 
It is to be noted that for a strict monotonic structure: $B_i(\theta) = \emptyset, \forall i < \l+1 $ or $i \geq k^*+\l$. For a fixed suboptimal arm $i$ such that  \(\min (k^*+\l-1,K) \geq i > k^* \). Let $d_1 = i-k^*-1 $ then:
\begin{equation}\label{eq:bad1_2}
\lambda = \{ \theta_1, \dots, \theta_{k'-1}, b^{k'}, b^{k'+1}, \dots,b^{k'+d_1} ,\theta_{k'+d_1+1}, \dots,\theta_i,\dots \theta_K \} \in B_i(\theta), 
\end{equation}
where \(\mu(b^{k'+d_1}) <\tau. \) Alternatively, for a fixed suboptimal arm $i$ such that $\l+1<i<k^*$. Let $d_2 = k^*-i$, then:
\begin{equation}\label{eq:bad2_2}
\lambda = \{ \theta_1, \dots,\theta_{k'-d_2-1},b^{k'-d_2},\dots, b^{k'-1}, \theta_{k'},\dots,\theta_i,\dots \theta_K \} \in B_i(\theta),   
\end{equation}
where $\mu(b^{k'-d_2}) \geq \tau$. Using (\ref{eq:bad1_2}) to solve the constraints (\ref{eq:constr_1}), we get 
\begin{align}\label{eq:constraintd1}
    \inf_{b :\mu(b^{k'+d_1}) <\tau } C_{k'} I(\theta_{k'}, b^{k'}) + \dots +
C_{k'+d_1} I(\theta_{k'+d_1}, b^{k'+d_1}) \geq 1,\forall \min(k^*+l-1,K)\geq i > k^*
\end{align}
Note that the constraint for each $\min(k^*+\l-1,K)\geq i>k^*$ in (\ref{eq:constraintd1}) has a common term of arm $k'$. Also note that since $\theta_{k'} \geq \tau$, and $I(a,b-u)$ is monotone increasing in $u$ for $a>b$ and $u>0$,
    $\inf_{b^{k'} < \tau} I(\theta_{k'}, b^{k'}) = I(\theta_{k'}, \tau)$.
The optimal solution for (\ref{eq:gl_opt_1}) is:
\begin{equation} \label{eq:sol1_2}
C_{k'} I(\theta_{k'}, \tau) = 1 , \quad C_i = 0 \quad \forall i \not\in \{k',k^*\}. 
\end{equation}
Using (\ref{eq:bad2_2}) to solve the constraints (\ref{eq:constr_1}), we get
\begin{align}\label{eq:constraintd2}
    \inf_{b :\mu(b^{k'-d_2}) \geq\tau } C_{k'-d_2} I(\theta_{k'-d_2}, b^{k'-d_2}) + \dots +
C_{k'-1} I(\theta_{k'-1}, b^{k'-1}) \geq 1,\forall k^*>i > \l+1.
\end{align}

Note that the constraint for each $\l+1<i<k^*$ in (\ref{eq:constraintd2}) has a common term of arm $k'-1$. Also note that since $\theta_{k'-1} < \tau$, and $I(a,b+u)$ is monotone increasing in $u$ for $a<b$ and $u>0$,
    $\inf_{b^{k'-1} \geq \tau} I(\theta_{k'-1}, b^{k'-1}) = I(\theta_{k'-1}, \tau)$.
The optimal solution for (\ref{eq:gl_opt_1}) is:
\begin{equation}\label{eq:sol2_2}
C_{k'-1} I(\theta_{k'-1}, \tau) = 1 , \quad C_i = 0 \quad \forall i \not\in\{ k' - 1,k^*\}.
\end{equation}
Hence, taking union of the solutions (\ref{eq:sol1_2}) and (\ref{eq:sol2_2}) $\forall i \neq k^*$, the optimization yields the asymptotic lower bound:
\[
\liminf_{T \to \infty} \frac{R^{\pi}(T)}{\log T} \ge C(\theta) = \frac{|\mu_{k^*} - \mu_{k'}|}{I_{\min}(\theta_{k'}, \tau)}\mathds{1}\{k^* \neq K\} + \frac{|\mu_{k^*} - \mu_{k'-1}|}{I_{\min}(\theta_{k'-1}, \tau)}.
\]
This proves the required.
\end{proof}
For variant (ii) defined in Section 1, the regret bound grows logarithmically, driven by the challenge of distinguishing the arms immediately above (\( k' \)) and below (\( k' - 1 \)) the threshold. These neighboring arms establish a reference point, enabling straightforward identification of the optimal arm as the \( \l^\th \) arm above \( \tau \). Notably, when \( \l = 1 \), this regret bound reduces to that of Theorem~\ref{thm:2}.\\

The theorems above apply not only to increasing sequences of arm means but also to decreasing ones, as shown by a straightforward parameter transformation detailed in the lemma that follows.

\begin{lemma}
 The regret bounds for identifying the arm just above the threshold $\tau \in (0,1)$ (Theorem~\ref{thm:1}) and the $\l^{\th}$ arm above $\tau$ (Theorem~\ref{thm:2}) directly extend to the settings of identifying the arm just below $\tau$ and the $\l^{\th}$ arm below $\tau$ under a strict monotone increasing structure, via problem inversion.
\end{lemma}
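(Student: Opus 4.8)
The idea is to build an explicit reward‑reflection bijection that turns every ``below‑$\tau$'' instance into an ``above‑$\tilde\tau$'' instance with $\tilde\tau = 1-\tau$, so that Theorems~\ref{thm:1} and~\ref{thm:2} apply verbatim. Concretely, let $\phi(x) = 1-x$ and, given $\theta = (\theta_1,\dots,\theta_K)\in\Theta_\tau$, define the instance $\tilde\theta$ by letting arm $j$ carry the pushforward of $\nu_{K+1-j}$ under $\phi$, so that $\tilde\mu_j = 1-\mu_{K+1-j}$, and set $\tilde\tau = 1-\tau$. Since $\mu_1<\cdots<\mu_K$ with $\mu_1<\tau\le\mu_K$, the reflected‑and‑reindexed means satisfy $\tilde\mu_1<\cdots<\tilde\mu_K$ with $\tilde\mu_1<\tilde\tau\le\tilde\mu_K$, hence $\tilde\theta\in\Theta_{\tilde\tau}$; for Bernoulli arms $\theta\mapsto\tilde\theta$ is a bijection $\Theta_\tau\to\Theta_{\tilde\tau}$ that is its own inverse. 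A policy $\pi$ for the below‑threshold problem on $\theta$ induces a policy $\tilde\pi$ for the above‑threshold problem on $\tilde\theta$ as follows: whenever $\tilde\pi$ receives an observation $\tilde X$ from arm $j$ of $\tilde\theta$, we feed $\pi$ the value $1-\tilde X$ as though it came from arm $K+1-j$, and when $\pi$ selects an arm $k$ we let $\tilde\pi$ play $K+1-k$. Because arm $j$ of $\tilde\theta$ is exactly the $\phi$‑pushforward of arm $K+1-j$ of $\theta$, the simulated stream fed to $\pi$ has precisely the law it would see when run on $\theta$; the correspondence $\pi\leftrightarrow\tilde\pi$ is deterministic and invertible on $\Pi$.

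Next I would check that the three ingredients of the bounds---the optimal arm, the gaps, and the pull counts---are matched by the reindexing $k\leftrightarrow K+1-k$. An arm satisfies $\mu_k<\tau$ in $\theta$ iff $\tilde\mu_{K+1-k}>\tilde\tau$ in $\tilde\theta$, and reindexing reverses the order, so the largest index with $\mu_k<\tau$ (the arm just below $\tau$) corresponds to the smallest index with $\tilde\mu_j\ge\tilde\tau$, i.e.\ the threshold‑crossing optimum $\tilde k^*_{(1)}$ of $\tilde\theta$; the same counting gives that $k^*_{(2,b)} = k^*_{(1)}-\l$ corresponds to $\tilde k^*_{(2,a)} = \tilde k^*_{(1)}+\l-1$. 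Writing $\tilde k^* = K+1-k^*$, we have $|\tilde\mu_{\tilde k^*}-\tilde\mu_j| = |(1-\mu_{k^*})-(1-\mu_{K+1-j})| = |\mu_{k^*}-\mu_{K+1-j}|$, and the number of pulls of arm $j$ under $\tilde\pi$ on $\tilde\theta$ equals the number of pulls of arm $K+1-j$ under $\pi$ on $\theta$; summing the regret definition term by term then shows $R^{\tilde\pi}(T;\tilde\theta) = R^{\pi}(T;\theta)$. In particular $\pi$ is uniformly good for the below‑threshold problem over $\Theta_\tau$ iff $\tilde\pi$ is uniformly good for the corresponding above‑threshold problem over $\Theta_{\tilde\tau}$.

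Finally I would verify that the complexity terms transform correctly. The KL divergence is invariant under a common pushforward, so $D_{KL}(\theta_i\|\theta_i')$ equals the divergence between the reflected distributions; and the one‑sided constraint $\mu(\cdot)\ge\tau$ defining $I_{\min}(\cdot,\tau)$ for an arm below $\tau$ becomes the constraint $\mu(\cdot)\le\tilde\tau$ for the matched arm above $\tilde\tau$ in $\tilde\theta$---which is exactly the form of $I_{\min}$ appearing in Theorems~\ref{thm:1} and~\ref{thm:2} (for Bernoulli this is merely the symmetry $I(\theta,\tau) = I(1-\theta,\,1-\tau)$). Applying Theorem~\ref{thm:1} (resp.\ Theorem~\ref{thm:2}) to the uniformly good policy $\tilde\pi$ on $\tilde\theta$, substituting the matched optimal arm, gaps, and divergences, and then reading the result back through the bijection $R^{\tilde\pi}(T;\tilde\theta) = R^\pi(T;\theta)$, yields the stated below‑threshold bounds. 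I expect the one genuinely delicate point to be the bookkeeping of the one‑sided infimum: one must confirm that the lower‑bound optimization for the below‑threshold task---where the ``bad'' alternatives perturb an arm's mean \emph{downward} across $\tau$---produces $\inf_{\mu(\theta')\le\tau}D_{KL}$, so that the reflection lines it up with the $\inf_{\mu(\theta')\ge\tilde\tau}$ convention used in Theorems~\ref{thm:1} and~\ref{thm:2}; everything else is routine relabelling.
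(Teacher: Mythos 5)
Your proposal is correct and takes essentially the same route as the paper: reflect rewards via $x \mapsto 1-x$, reindex arms as $k \mapsto K+1-k$, map $\tau$ to $1-\tau$, and observe that the optimal arm, the gaps, and the KL terms all transfer. You simply carry out the bookkeeping (the instance bijection, the induced policy $\tilde\pi$, the equality of regrets, and the alignment of the one-sided infimum in $I_{\min}$) more explicitly than the paper does.
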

\begin{proof}
Define transformed rewards \( Y_k = 1 - X_k \), where \( X_k \sim \nu(\theta_k) \) is supported on \( [0, 1] \), so \( Y_k \) has mean \( \mu_k^Y = 1 - \mu_k \). Reindex arms as \( k' = K + 1 - k \), mapping original arm \( k = K + 1 - k' \) to arm \( k' \) with mean \( \mu_{k'}^Y = 1 - \mu_{K + 1 - k'} \). Since \( \mu_1 < \mu_2 < \dots< \mu_K \), we have \( \mu_1^Y > \mu_2^Y > \dots > \mu_K^Y \), preserving monotonicity. The arm just below \( \tau \), \( \max \{ k : \mu_k < \tau \} \), corresponds to the arm just above \( 1 - \tau \), \( \min \{ k' : \mu_{k'}^Y > 1 - \tau \} \), in the transformed problem. Similarly, the \( \l^\th \) arm below \( \tau \) maps to the \( \l^\th \) arm above \( 1 - \tau \). The regret depends on mean differences, which remain equivalent, and KL divergences \( I(\theta_k, \lambda_k) \) are preserved. Thus, the regret bounds from Theorems~\ref{thm:1} and~\ref{thm:2} apply to the below-threshold variants. Algorithms optimal for the above-threshold objectives, matching these bounds up to vanishing terms, remain optimal for the transformed problem.
\end{proof}
\begin{theorem}[Lower bound for Threshold Proximity Identification]\label{thm:3}
 Let $\pi \in \Pi$ is a uniformly good algorithm for threshold proximity identification problem. Then, for any $\theta \in \Theta_{\tau}$,  the regret bound is:
    \[
        \liminf_{T \to \infty} \frac{R^{\pi}(T)}{\log T} \geq \frac{|\mu_{k^*} - \mu_{k}|}{I_{\min}(\theta_{k}, 2\tau - \theta_{k^*})},
    \]
where $k^* = \arg\min \{ |\mu_k - \tau|\}$ is the optimal arm, and 
\(
k =
\begin{cases}
    k^*-1 & \text{if  } \mu_{k^*}\geq \tau \\
    k^*+1 & \text{otherwise}
\end{cases}
 \quad.\)
\end{theorem}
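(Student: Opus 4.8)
The plan is to run the Graves--Lai programme exactly as in the proofs of Theorems~\ref{thm:1} and~\ref{thm:2}: realise the regret lower bound as the value $C(\theta)$ of optimization (\ref{eq:gl_opt_1}), with the bad set $B(\theta)$ rewritten for the proximity objective, and then solve that optimization by isolating its one active constraint. First I would reduce to the case $\mu_{k^*}\ge\tau$ (so that $k=k^*-1$); since every $\theta\in\Theta_\tau$ has $\mu_1<\tau$, this forces $k^*\ge 2$, so $k\ge 1$ is well defined. The structural fact the whole argument hinges on is that, $k^*$ being the (unique) closest arm and $\mu_{k^*-1}<\mu_{k^*}$ forcing $\mu_{k^*-1}<\tau$, we get $\tau-\mu_{k^*-1}\ge\mu_{k^*}-\tau$, i.e. $\mu_{k^*-1}\le 2\tau-\mu_{k^*}\le\tau\le\mu_{k^*}$: the neighbour's mean sits at or below the reflection $2\tau-\mu_{k^*}$ of $\mu_{k^*}$ across $\tau$.

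Next I would define $B(\theta)=\{\lambda\in\Theta_\tau:\mu_{k^*}(\lambda)=\mu_{k^*}(\theta)\text{ and }\arg\min_k|\mu_k(\lambda)-\tau|\ne k^*\}$, partitioned into the disjoint pieces $B_i(\theta)$ on which arm $i\ne k^*$ is the closest. Two easy monotonicity observations (mirroring the ``$B_i=\emptyset$ for $i>k^*$'' step of Theorem~\ref{thm:1}) do the bookkeeping: (a) for $i>k^*$, $\mu_i(\lambda)>\mu_{k^*}(\lambda)=\mu_{k^*}\ge\tau$, so arm $i$ is strictly farther from $\tau$ than $k^*$ and $B_i(\theta)=\emptyset$; and (b) for any $i<k^*$, if $\mu_{k^*-1}(\lambda)\le\tau$ then $\mu_i(\lambda)<\mu_{k^*-1}(\lambda)\le\tau$ makes arm $i$ farther from $\tau$ than arm $k^*-1$, a contradiction, so every $\lambda\in B_i(\theta)$ must have $\mu_{k^*-1}(\lambda)>\tau$; for $i=k^*-1$ the requirement that $k^*-1$ be strictly closer to $\tau$ than $k^*$ sharpens this to $\mu_{k^*-1}(\lambda)>2\tau-\mu_{k^*}$.

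Now the optimization collapses. For the lower bound direction I would test the single-arm perturbation $\lambda^\varepsilon=(\theta_1,\dots,\theta_{k^*-2},\,2\tau-\mu_{k^*}+\varepsilon,\,\theta_{k^*},\dots,\theta_K)$ and check (monotonicity, membership in $\Theta_\tau$, strict proximity of arm $k^*-1$) that $\lambda^\varepsilon\in B_{k^*-1}(\theta)$ for all small $\varepsilon>0$; then the constraint $\inf_{\lambda\in B_{k^*-1}(\theta)}\sum_k C_kI(\theta_k,\lambda_k)\ge1$ forces, by continuity of $I$ as $\varepsilon\downarrow0$, that $C_{k^*-1}\,I(\theta_{k^*-1},2\tau-\theta_{k^*})\ge1$. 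Since $\mu_{k^*-1}\le 2\tau-\mu_{k^*}$ we have $I(\theta_{k^*-1},2\tau-\theta_{k^*})=I_{\min}(\theta_{k^*-1},2\tau-\theta_{k^*})$, so every feasible $(C_k)$ has $C_{k^*-1}\ge 1/I_{\min}(\theta_{k^*-1},2\tau-\theta_{k^*})$ and hence objective $\ge |\mu_{k^*}-\mu_{k^*-1}|/I_{\min}(\theta_{k^*-1},2\tau-\theta_{k^*})$; matching this by putting all weight on $C_{k^*-1}$ (observation (b) makes the remaining constraints $i<k^*-1$ automatic, since there $I(\theta_{k^*-1},\cdot)\ge I(\theta_{k^*-1},\tau)\ge I(\theta_{k^*-1},2\tau-\mu_{k^*})$) shows $C(\theta)$ equals that value. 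The case $\mu_{k^*}<\tau$ (where $k=k^*+1$) I would dispatch with the reward inversion $X_k\mapsto 1-X_k$ and reindexing $k\mapsto K+1-k$ from the preceding Lemma: it turns the instance into the $\mu_{k^*}\ge\tau$ case, sends $2\tau-\theta_{k^*}$ to $1-(2\tau-\theta_{k^*})$, and leaves mean gaps and KL divergences unchanged, so the bound transfers. Invoking $\liminf_{T\to\infty}R^\pi(T)/\log T\ge C(\theta)$ completes the proof.

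The main obstacle is correctly identifying the confusing instance. Unlike threshold-crossing, where the binding perturbation moves a neighbouring mean across $\tau$ itself, the ``confusion region'' for proximity is defined by equidistance from $\tau$, so the critical target is the reflected value $2\tau-\mu_{k^*}$ rather than $\tau$; one must verify both that pushing $\mu_{k^*-1}$ to just above this value is feasible while holding $\mu_{k^*}$ fixed, and that no alternative confusing instance is cheaper --- in particular that making any farther arm $i<k^*-1$ the closest necessarily drags $\mu_{k^*-1}(\lambda)$ past $\tau$, which is precisely observation (b) and is what makes those instances no cheaper than the one used.
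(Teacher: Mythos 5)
Your proposal is correct and follows essentially the same route as the paper's proof: the Graves--Lai optimization with the bad set partitioned into $B_i(\theta)$, the observation that the binding perturbation pushes $\mu_{k^*-1}$ to the reflected value $2\tau-\mu_{k^*}$ rather than to $\tau$, the resulting single active constraint $C_{k^*-1}I(\theta_{k^*-1},2\tau-\theta_{k^*})=1$, and the symmetric treatment of the $\mu_{k^*}<\tau$ case. Your observation (b), verifying that the constraints for $i<k^*-1$ are dominated by the $i=k^*-1$ constraint, is a welcome explicit justification of a step the paper leaves implicit, but it does not change the argument.
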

\begin{proof}
    Following the same steps as in the proof of Theorem~\ref{thm:1}, consider any instance $\theta \in \Theta_\tau$. Let $k^*$ denote the optimal arm for this $\theta$, i.e. $k^* = \arg\min_k (|\mu_{k}-\tau|)$. The set of bad parameters is:
\begin{equation*}
B(\theta) = \left\{ \lambda \in \Theta_{\tau} : \mu_{k^*}(\lambda) = \mu_{k^*}(\theta) \text{ and } \exists\, k \neq k^* : \right.
\left. |\mu_k(\lambda) - \tau| < |\mu_{k^*}(\lambda) - \tau| \right\}.
\end{equation*}
Without loss of generality, let us consider $ \mu_{k^*}\geq \tau$. Now, for each suboptimal arm $i$, define
\[
B_i(\theta) = \{ \lambda \in B(\theta) : i = \arg\min_j|\mu_j(\lambda) - \tau| \}.\]
\noindent
Solving the same optimization problem as in (\ref{eq:constr_1}), we first construct $\lambda$ from $B_i(\theta)$.
It is to be noted that for a monotonic structure: $B_i(\theta) = \emptyset \quad \forall i > k^*$, For a fixed \( i < k^* \), let:
\begin{equation}\label{eq:bad_para_3}
\lambda = \{ \theta_1, \dots, \theta_{i-1}, b^i, b^{i+1}, \dots,b^{k^*-1} ,\theta_{k^*}, \dots, \theta_L \}, \text{ where  }  |\mu(b^i) -\tau| < |\mu(\theta_{k^*}) - \tau|.
\end{equation}
Using (\ref{eq:bad_para_3}) to solve the constraints (\ref{eq:constr_1}) in $i$ for $C(\theta),$
\noindent
\begin{align}\label{eq:constraint3}
\inf_{b : \mu( \theta_{k^*} )\geq \mu(b) \geq 2\tau-\mu(\theta_{k^*})} C_i I(\theta_i, b^i) + \dots +
C_{k^*-1} I(\theta_{k^*-1}, b^{k^*-1}) \geq 1,\forall i < k^*.
\end{align}
Note that the constraint for each $i<k^*$ in (\ref{eq:constraint3}) has a common term of arm $k^*-1$. Also note that since $\theta_{k^*-1} < 2\tau-\theta_{k^*}$, and $I(a,b+u)$ is monotone increasing in $u$ for $a<b$ and $u>0$,
    $\inf_{\theta_{k^*}\geq b^{k^*-1} \geq 2\tau-\theta_{k^*}} I(\theta_{k'-1}, b^{k'-1}) = I(\theta_{k'-1}, \tau)$.
The optimal solution for (\ref{eq:gl_opt_1}) is:
\begin{equation}\label{eq:sol1_3}
C_{k^*-1} I(\theta_{k^*-1}, 2\tau-\theta_{k^*}) = 1 , \quad C_i = 0 \quad \forall i \setminus \{k^* -1 ,k^*\}.
\end{equation}
Similarly, solving when $\mu_{k*} < \tau$, we get \begin{equation}\label{eq:sol2_3}
    C_{k^*+1}I(\theta_{k^*+1},2\tau - \theta_{k^*})=1, \quad C_i=0 \quad \forall i\setminus\{k^*+1,k^*\}.
\end{equation} Finally, the optimization problem using (\ref{eq:sol1_3}) and (\ref{eq:sol2_3}) yields the lower bound:
\[
\liminf_{T \to \infty} \frac{R^{\pi}(T)}{\log T} \geq C(\theta) = \frac{|\mu_{k^*} - \mu_{k}|}{I(\theta_{k}, 2\tau-\theta_{k^*})}, \text{where } k=k^*-1 \text{ if } \mu_{k^*} \geq \tau \text{ else } k=k^*+1.
\]
This proves the required.
\end{proof}
\vspace{-10 pt}
For variant (iii), the regret bound for identifying the arm with mean closest to the threshold \( \tau \) relies on a single term. When the optimal arm’s mean is \( \geq \tau \) (or \( < \tau \)), the suboptimal arm contributing to the regret is the one immediately below (or above) \( \tau \). The key challenge lies in distinguishing both the optimal and suboptimal arms from \( \tau \), which shapes the regret bound. With these derived lower bounds, we use their insights to build algorithms which approach these bounds.
\section{Algorithms}
In this section, we present three novel algorithms--Threshold-Optimal Sampling in Monotonic Bandits (TOSMB), Ranked-Threshold Optimal Sampling in Monotonic Bandits (RTOSMB), and Proximity-Focused Optimal Sampling in Monotonic Bandits (POSMB) designed to address the three threshold-based objectives within a monotonic structured multi-armed bandit framework, where the true means of the arms are ordered in increasing order, achieving regret bounds that align with the asymptotic lower bounds derived in Theorems \ref{thm:1}, \ref{thm:2}, and \ref{thm:3} for Bernoulli rewards. 

To achieve optimal sampling, each algorithm assigns KL-divergence upper confidence bounds, following the KL-UCB framework \cite{garivier2013klucbalgorithmboundedstochastic}, to define an optimistic index for each arm. For RTOSMB and POSMB, KL-divergence lower confidence bounds (KL-LCB) are used in conjunction with KL-UCB, utilizing the ordered structure of the arms and comparisons with the threshold \(\tau\) to guide optimal arm selection. Across all algorithms, the indices are defined as follows:
\vspace{3 pt}
\begin{align*}
    \text{UI}_k(n) &= \sup \left\{ q : t_k(n) D(\tilde{\mu}_k(n) \| q) \leq \log(n) + c \log(\log(n)) \right\}, \\
    \text{LI}_k(n) &= \inf \left\{ q : t_k(n) D(\tilde{\mu}_k(n) \| q) \leq \log(n) + c \log(\log(n)) \right\},
\end{align*}
where \(t_k(n)\) is the number of times arm \(k\) has been sampled, \(\tilde{\mu}_k(n)= \frac{\sum_{1\leq n\leq T} \mathds{1}\{ k(n) = k \}X_k(n)}{\sum_{1\leq n\leq T} \mathds{1}\{ k(n) = k \}}  \ \) for \(t_k(n)>0\) else \(\tilde{\mu}_k(n)=0\) is the empirical mean, and \(c > 3\) is a tuning constant.

Pseudo codes for TOSMB, RTOSMB and POSMB algorithms are given in Algorithms~\ref{alg:TOSMB}, \ref{alg:RTOSMB} and \ref{alg:POSMB}.
\begin{algorithm}
\caption{Threshold-Optimal Sampling in Monotonic Bandits (TOSMB)}\label{alg:TOSMB}
\begin{algorithmic}[1]
\STATE Sample each arm once, Compute $\text{UI}_i$ for all $i \in [K]$
\STATE Set candidate arm \(\quad \quad\text{ca} \leftarrow \min\{\, i : \tilde\mu_i > \tau \,\} \)
\FOR{$t=K+1$ \textbf{to} $T$}
    \STATE Sample arm: ca
    \IF{$\text{ca} > 1$ \textbf{and} $\text{UI}_{\text{ca}-1} > \tau$} \STATE $\text{ca} \leftarrow \text{ca} - 1$
        
    \ELSIF{$\text{ca} < n$ \textbf{and} $\text{UI}_{\text{ca}} < \tau$}
        \STATE $\text{ca} \leftarrow \text{ca} + 1$
    \ENDIF

\ENDFOR
\end{algorithmic}
\end{algorithm}

\vspace{-2 pt}
\begin{algorithm}
\caption{Ranked-Threshold Optimal Sampling in Monotonic Bandits (RTOSMB)}\label{alg:RTOSMB}
\begin{algorithmic}[1]
\STATE Sample each arm once, Compute $\tilde{\mu}_i$ for all $i \in [K]$
\STATE Set the candidate arm: $\text{ca} \leftarrow \max\{{i:\tilde{\mu}_i<\tau}\}$
\FOR{$t = K+1$ \textbf{to} $T$}
    \STATE Compute $\text{UI}_i$, $\text{LI}_i$ for all $i \in [K]$
    \STATE Let neighbor arm: $\text{na} \leftarrow \text{ca} + 1$
    \IF{$\text{LI}_{\text{na}} \geq \tau$ \AND $\text{UI}_{\text{ca}} \leq \tau$}
        \STATE Sample arm: $\text{ca} + l$
    \ELSIF{$\text{UI}_{\text{ca}} > \tau$}
        \STATE Sample arm: $\text{ca}$
    \ELSIF{$\text{LI}_{\text{na}} < \tau$}
        \STATE Sample arm: $\text{na}$
    \ELSE 
        \STATE Sample arm: $\text{ca}$
    \ENDIF
     \IF{$\text{LI}_{\text{ca}} > \tau$ \AND $\text{ca} > 0$}
        \STATE $\text{ca} \leftarrow \text{ca} - 1$
    \ELSIF{$\text{UI}_{\text{na}} < \tau$ \AND $\text{ca} < K-1$}
        \STATE $\text{ca} \leftarrow \text{ca} + 1$
    \ENDIF
\ENDFOR
\end{algorithmic}
\end{algorithm}

Both TOSMB and RTOSMB exploit the monotonic structure of arms using confidence bounds UI and LI, which give a bound to the true means. UI increases for unexplored arms, while LI decreases, enhancing their likelihood of being captured by our algorithm; however, once selected, UI and LI rapidly converge toward the true mean, enabling consistent identification of the optimal arm, thus balancing the required exploration and exploitation. TOSMB adjusts a candidate arm with UI to find the first arm above $\tau$, while RTOSMB uses both UI and LI to confirm arm positions and then target the  $\l^{\th}$ arm relative to $\tau$, achieving precise identification efficiently.

In the last threshold variant, the objective is to efficiently identify the arm whose expected reward is closest to a predefined threshold \(\tau\). We describe our algorithm POSMB, which approaches the regret derived in Theorem~\ref{thm:3}. The algorithm maintains a candidate arm based on proximity to the threshold and, at each round, updates its decision using upper and lower confidence bounds derived from observed rewards.

\begin{algorithm}
\caption{Proximity-Focused Optimal Sampling in Monotonic Bandits (POSMB)}\label{alg:POSMB}.
\begin{algorithmic}[1]
\STATE Sample each arm once, Compute $\tilde{\mu}_i$ for all $i \in [K]$
\STATE Set candidate arm: \( \quad \quad\text{ca} \leftarrow \arg\min_i |\tilde{\mu}_i - \tau| \)
\FOR{$t = K+1$ \TO $T$}
    \STATE Compute $\text{UI}_i, \text{LI}_i$ for all $i \in [K]$
    \STATE Let neighbor arm: $\text{na} \leftarrow$ $\text{ca}+1$
    \IF{ $\tau$ $\in$ both $[\text{LI}_{ca}, \text{UI}_{ca}]$ and $[\text{LI}_{na}, \text{UI}_{na}]$}
        \STATE Sample arm: $\text{ca}$ if $|\tilde\mu_{ca}-\tau|<|\tilde\mu_{na}-\tau|$ else $\text{na}$
    \ELSIF{ $\tau$ $\in$ $[\text{LI}_{ca}, \text{UI}_{ca}]$}
        \STATE Sample arm:  $\text{ca}$
    \ELSIF{ $\tau$ $\in$ $[\text{LI}_{na}, \text{UI}_{na}]$}
        \STATE Sample arm: $\text{na}$
    \ELSE
        \STATE Sample arm: $\text{ca}$ if $|\text{UI}_{ca}-\tau| < |\text{LI}_{na}-\tau|$ else $\text{na}$
    \ENDIF
     \IF{$\text{LI}_{\text{ca}} > \tau$ \AND $\text{ca} > 0$}
        \STATE $\text{ca} \leftarrow \text{ca} - 1$
    \ELSIF{$\text{UI}_{\text{na}} < \tau$ \AND $\text{ca} < K-1$}
        \STATE $\text{ca} \leftarrow \text{ca} + 1$
    \ENDIF
\ENDFOR
\end{algorithmic}
\end{algorithm}

\section{Simulations}
We empirically evaluate the performance of our proposed algorithms on the three threshold-based identification problems introduced in this work: (i) TOSMB, (ii) RTOSMB, and (iii) POSMB. In the absence of established baselines for these problem settings, we focus our analysis on the convergence behavior of the algorithms toward the corresponding lower bounds, assessing their efficiency.

 We randomly take arm means for $K=10$, and then fix a threshold $\tau \in (0,1)$ such that all constraints are satisfied i.e $\mu_1 < \tau$ and $\mu_K \geq \tau$. Each algorithm is evaluated over a time horizon of $T = 10^6$ iterations, a feasible duration providing insight into asymptotic behavior. To estimate expected performance metrics such as cumulative regret,  we conduct Monte Carlo simulations by running each algorithm independently across 30 randomized trials, with each trial initialized using a different seed. The specific arm means and the value of $\l$ for RTOSMB are detailed in the caption of Figure~\ref {fig:1}.


 It can be seen in Figure~\ref{fig:1} that the average regret approaches the theoretical lower bound as $t \to \infty$ in all the cases, with RTOSMB and POSMB showing closer follow-up with the lower bound.
\newpage
\begin{figure}[htbp]
    \centering

    \begin{minipage}[t]{0.45\linewidth}
        \centering
        \includegraphics[width=\linewidth]{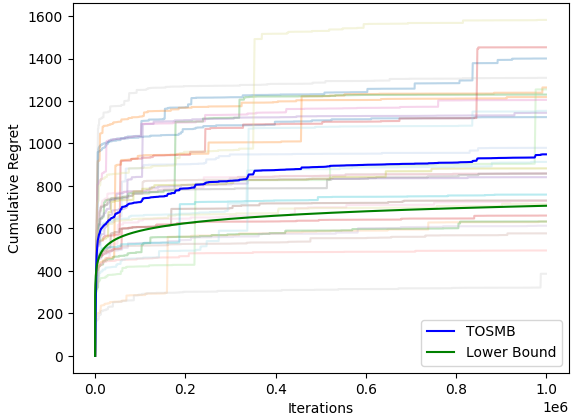}
        \text{(a)}
    \end{minipage}
    \hfill
    \begin{minipage}[t]{0.45\linewidth}
        \centering
        \includegraphics[width=\linewidth]{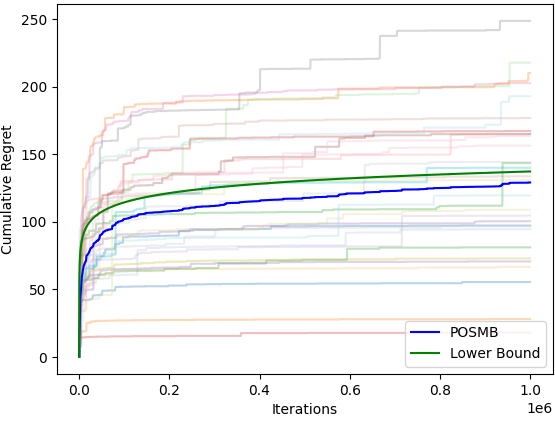}
        \text{(b)}
    \end{minipage}

    \begin{minipage}[t]{0.45\linewidth}
        \centering
        \includegraphics[width=\linewidth]{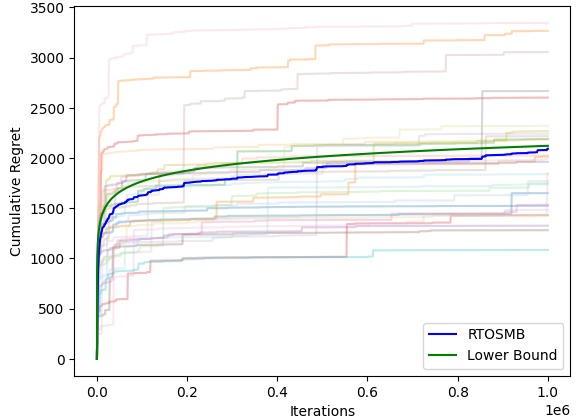}
        \text{(c)}
    \end{minipage}
    \hfill
    \begin{minipage}[t]{0.45\linewidth}
        \centering
        \includegraphics[width=\linewidth]{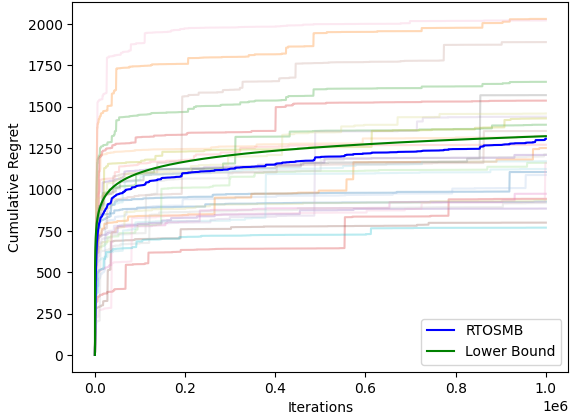}
        \text{(d)}
    \end{minipage}

    \caption{\small (a) shows performance of TOSMB, (b) shows performance of POSMB, (c) shows performance of RTOSMB with $\l=4$ and (d) shows performance of RTOSMB with $\l=-2$ ($3^{rd}$ arm below $\tau$), for $K=10$ arm with true means as $\mu=$\{0.038, 0.041, 0.078, 0.36, 0.533, 0.796, 0.814, 0.85, 0.94, 0.967\} in $10^6$ iterations }
    \label{fig:grid_2x2}\label{fig:1}
\end{figure}

\section{Limitations}
Our framework assumes that the decision maker has a prior knowledge of monotonicity direction (increasing or decreasing). Finding the lower bounds without this prior knowledge can be of interest. Moreover, we have not theoretically established that our proposed algorithms achieve the lower bound. Establishing this is a part of our future work.
\section{Conclusion}
In this paper, we have addressed threshold-based bandit problems under a strict monotonic structure, where arm means satisfy \(\mu_1  < \cdots < \mu_K\) or \(\mu_1  > \cdots > \mu_K\). We proposed efficient algorithms for threshold-crossing identification, \(l\)-th above (below) threshold identification, and threshold-proximity identification, exploiting the structure to achieve optimal decision-making. Our theoretical contributions include asymptotic regret lower bounds that depend only on arms adjacent to $\tau$. Through Monte Carlo simulations, we demonstrated our algorithms empirically approach these bounds. Our work expands traditional bandit theory, providing useful and theoretically justified solutions to applications in clinical dosing, stock selection, communication networks and more.
\bibliography{references}
\end{document}